\numberwithin{equation}{section}
\newcommand{\Lip}[1]{\mathrm{Lip}(#1)}
\begin{document}   

\title[LDP for entropy regularised IPS in zero-sum games]{Large deviations for interacting particle dynamics for finding mixed equilibria in zero-sum games}


\author[V.~Nilsson]{Viktor Nilsson}
\author[P.~Nyquist]{Pierre Nyquist}
\address[V.~Nilsson]{KTH Royal Institute of Technology}
\address[P.~Nyquist]{Chalmers University of Technology and University of Gothenburg}
\email{{vikn@kth.se}, {pnyquist@chalmers.se}}
\thanks{}






\subjclass[2010]{}

\keywords{Large deviations; minmax games; Mixed Nash equilibrium; Generative adversarial networks; Interacting particle system}


\begin{abstract} 
Finding equilibrium points in continuous minmax games has become a key problem within machine learning, in part due to its connection to the training of generative adversarial networks and reinforcement learning. Because of existence and robustness issues, recent developments have shifted from pure equilibria to focusing on mixed equilibrium points. 
In this work we consider a method for finding mixed equilibria in two-layer zero-sum games based on entropic regularisation, where the two competing strategies are represented by two sets of interacting particles. We show that the sequence of empirical measures of the particle system satisfies a large deviation principle as the number of particles grows to infinity, and how this implies convergence of the empirical measure and the associated Nikaid\^o-Isoda error, complementing existing law of large numbers results.

\end{abstract}   

\maketitle

\section{Introduction}
\label{sec:intro}
Zero-sum games are at the heart of game theory and an important concept in a range of areas of applied mathematics. The development of efficient methods for finding equilibria in such games is therefore a central problem in algorithmic game theory \cite{BO98}. The problem is often cast as finding an optimum of a sequential decision problem, and much of the literature in mathematical programming and algorithmic game theory focuses on the setting of convex-concave objective functions. 

In the context of machine learning, zero-sum minmax games have become a central concept, in part due to their connection to generative adversarial networks (GANs) \cite{goodfellow2014generative}, adversarial training \cite{Madry2018}, and reinforcement learning \cite{Busoniu2008}. Because of this, there is a renewed interest in the study of algorithms for finding equilibria in continuous games without convexity-concavity assumptions. As an example, Goodfellow et al.\ \cite{goodfellow2014generative} originally cast the GAN training problem as follows: find a Nash equilibrium of the zero-sum game
\begin{align*}
    \min _{G} \max _{D} V(D, G)=\mathbb{E}_{\boldsymbol{x} \sim p_{\text {data }}(\boldsymbol{x})}[\log D(\boldsymbol{x})]+\mathbb{E}_{\boldsymbol{z} \sim p_{\boldsymbol{z}}(\boldsymbol{z})}[\log (1-D(G(\boldsymbol{z})))].
\end{align*}
Models of this type are notoriously difficult to train \cite{salimans2016improved, arjovsky2017towards, arjovsky2017wasserstein,barnett2018convergence}, while also having enormous empirical success in a number of areas \cite{karras2021alias, engel2019gansynth, kim2020learning, yu2018generative, zhang2017stackgan, clark2019adversarial}. This has sparked an interest in developing computational methods for finding equilibria in general minmax games, as well as a surge in work on the theoretical underpinnings of GAN training; see, e.g., \cite{HLC19, DEJ+20, CKZ22, CG20, GM21, SAK21} and references therein.

In the absence of a convexity-concavity assumption, finding (pure) equilibria points in minmax games is a challenging problem \cite{DSZ21}. Much of the existing work on computational methods concerns algorithms that come with some type of local convergence. For work along these lines, with a focus on the machine learning context, see e.g. \cite{Heusel17, Daskalakis18, Balduzzi18, MLZ+18, Adolphs19, MR19, MJS19, RCJ19, JNJ20, Fiez21, Tsaknakis21}.

Motivated by the need for efficient methods for training GANs, \cite{HLC19} shift the focus from pure to mixed strategies and reformulates the zero-sum game that defines the training as a multi-agent optimisation problem in the space of probability measures. 
Building on this work, in \cite{DEJ+20} the authors analyse general minmax games, including GANs, and associated mixed Nash equilibria by connecting them to Langevin dynamics. They construct implementable training dynamics, and test them numerically, and provide a theoretical analysis based on gradient flows in the space of probability measures \cite{AGS08}. In \cite{Lu2023}, Lu studies a version of this dynamics, with an additional time-scale separation between the two types of particles (i.e., player dynamics). In the very recent work \cite{LuMon2025}, Lu and Monmarch\'e consider the dynamics from \cite{DEJ+20} in the setting of zero-sum games with $K \geq 2$ players. These works are all related to the open questions posed in \cite{WangChizat2024} related to mean-field Langevin dynamics in zero-sum games; see also the references in \cite{Lu2023, LuMon2025} for more on mean-field-type dynamics in the machine learning and zero-sum games intersection.

When using a system of interacting particles, as in \cite{HLC19, DEJ+20}, it is crucial to understand the properties of the system as the number of particles grows. Along with convergence of the particle system, e.g., law of large numbers and central limit results, an important aspect is to understand the fluctuations from such limits. In this note we therefore consider the training dynamics of \cite{DEJ+20} from a large deviations perspective. Under mild regularity conditions on the payoff function of the underlying game, we establish a large deviation principle (LDP) for the empirical measure of an interacting particle system corresponding to the Langevin Descent-Ascent dynamics of \cite{DEJ+20}. We show how slightly different versions of the convergence results of \cite{DEJ+20}, for the interacting particle systems and the associated Nikaid\^o-Isoda error, follow from this LDP. To prove the LDP, we use that the particle dynamics corresponding to the strategies of the two players can be formulated in a way that fits with the large deviation results of \cite{BDF12}. 
In addition to obtaining the results in this note, building on the rather general results of \cite{BDF12} also sets the stage for future work on dynamics with more general diffusion terms. We also posit that large deviation results of the type considered here can be a different type of starting point for addressing the open problems in \cite{WangChizat2024}.

Large deviations have proven to be extremely useful in the analysis and design of Monte Carlo methods, in particular in the rare-event context; for some examples see \cite{AG07, BD19, DupuisWangSubsol, DSZ15, DDN18, BNS21}. Whereas tools from large deviations theory, and their connections to stochastic control, have been used extensively in the Monte Carlo setting, they are largely unexplored in the analysis of machine learning methods. Moreover, starting with the work \cite{ADPZ11}, it has been established that there is a close link between gradient flows on the space of probability measures and large deviations for particle systems. In a sense, an LDP identifies the most natural gradient flow formulation for a given PDE; see \cite{ADPZ13, DPZ13, PGN21} and references therein. With the increased use of particle systems and gradient flows in the analysis of machine learning methods, it is natural to introduce the large deviation framework in this setting. Our work can be seen as a first step in this direction, by establishing a relevant LDP in the setting of zero-sum games. This opens up the possibility for more detailed analysis of computational methods for finding mixed Nash equilibria, including for the training of GANs, based on the associated rate function. 

For future work, we are interested in extending the results to variants of mirror-descent-like algorithms, as appearing in, e.g., \cite{HLC19, DEJ+20, BKPP21}, and computing the corresponding rate functions for specific examples of minmax games appearing in the training of GANs. The latter will be used to investigate the impact different parameters have on convergence and to compare different methods (i.e., particle dynamics). A stochastic version of what in \cite{DEJ+20} is referred to as Wasserstein-Fisher-Rao dynamics is of particular interest.

\subsection*{Notation}
\label{sec:notation}
For a space $\calS$, we take $\calP (\calS)$ to denote the space of probability measures on $\calS$ and $C([0,T]: \calS)$ to denote the space of continuous functions from $[0,T]$ into $\calS$. We take $C_b (\calS)$ to be the space of real-valued, bounded functions on $\calS$. If $\calS$ is a metric space, with metric $d:\calS \times \calS \to \bR$, we take $BL(\calS)$ to be the set of functions that are bounded and Lipschitz on $\calS$, with norm
\[
    \norm{f}_{BL} = \norm{f}_\infty + \Lip f,
\]
where $\Lip f$ is the Lipschitz constant of $f$. The dual bounded-Lipschitz metric on $\calP (\calS)$ is defined as 
\[
	d_{BL} (\mu, \nu) = \sup _{f \in BL(\calS), \norm{f} _{BL} \leq 1 }  \int _{\calS} f d\left[\mu - \nu \right], 
\]
and metrises the topology of weak convergence on $\calP(\calS)$. Another family of metrics on $\calP (\calS)$ that will be used is the Wasserstein metrics: for $p \geq 1$, the $p$-Wasserstein metric on the space of probability measures on $\calS$ with finite $p$th moment defined as
\[
	W_p (\mu, \nu) = \left[ \inf _{\gamma \in \Gamma (\mu, \nu)} \int _{\calS \times \calS} d(x,y) ^p d \gamma (x,y) \right] ^{1/p},
\]
where $\Gamma (\mu, \nu)$ is the collection of all couplings of $\mu$ and $\nu$. Convergence in $W_p$ is equivalent to weak convergence plus convergence of the first $p$ moments; see \cite{Vil09} for a more detailed discussion of the Wasserstein metrics.

Lastly, for an element $s$ of a product space $\calS _1 \times \calS _2$, we use $s_1$ to denote the $\calS _1$-component of $s$ and $s_2$ to denote the $\calS _2$-component.

\section{Definitions and model setup}
The setting of interest in this paper is two-player zero-sum games and methods for finding mixed Nash equilibria for such games. To define the relevant concepts and quantities, let $\calX =\bR ^{d_x}$ and $\calY = \bR ^{d_y}$, for some $d_x, d_y \geq 1$; for convenience we set $\calZ = \calX \times \calY$. We focus on the Euclidean setup here, although generalisation to Riemannian manifolds, the setting used in \cite{DEJ+20}, can also be considered. A \textit{two-player zero-sum game} consists of a set of two players with parameters $x \in \calX$ and $y \in \calY$ and a function $l: \calZ \to \bR$ that gives the payoff for player one. That is, for $(x,y) \in \calZ$, the payoff for player one, the $x$-player, is $l(x,y)$ and the payoff of player two, the $y$-player, is $-l(x,y)$. To ease comparison with \cite{DEJ+20} we refer to $l$ also as the \textit{loss} of the game.

A \textit{pure Nash equilibrium point} (NE) \cite{Nash51} is a pair of strategies $(x^*, y^*) \in \calZ$ such that 
\begin{align}
\label{eq:NE}
	l(x^*, y) \leq l(x^*, y^*) \leq l(x, y^*), \ \ \forall (x,y) \in \calZ.
\end{align}
A \textit{mixed Nash equilibrium point} (MNE) \cite{Nash50, Gli52} is a pair of probability distributions $(\mu _x ^*, \mu _y ^*) \in \calP (\calX) \times \calP (\calY)$ such that, for all $\mu_x \in \calP (\calX)$, $\mu_y \in \calP (\calY)$ ,
\[
	\int _\calY \int _\calX l(x,y) d \mu _x ^* (x) d\mu _y (y) \leq \int _\calY \int _\calX l(x,y) d \mu _x ^* (x) d\mu _y ^* (y) \leq \int _\calY \int _\calX l(x,y) d \mu _x (x) d\mu _y ^* (y).
\]
Similar to \cite{DEJ+20}, to simplify the notation we take $L$ to denote the expected loss: For $\mu, \nu$ in $\calP (\calX)$ and $\calP (\calY)$,
\[
	L (\mu, \nu) = \int _\calY \int _\calX l(x,y) d\mu (x) d\nu (y).
\]
The definition of an MNE then becomes: for all $\mu_x \in \calP (\calX), \mu_y \in \calP (\calY)$,
\[
	L(\mu _x ^*, \mu _y) \leq L (\mu _x ^*, \mu _y ^*) \leq L (\mu _x, \mu _y ^*).
\]
Comparing this to \eqref{eq:NE} shows that an MNE for the loss $l$ is a pure Nash equilibrium point with respect to $L$. For a given loss function $l$, neither Nash equilibria nor MNEs are guaranteed to exist for continuous games. However, MNEs exist in greater generality (see, e.g. \cite{Gli52}) and are more viable for computational methods. 

The aim of \cite{HLC19, DEJ+20} is to construct efficient computational methods for finding approximations of an MNE for a given loss function $l$. A common way to quantify the accuracy of such an approximation $(\hat \mu _x, \hat \mu _y )$, used also in \cite{DEJ+20}, is the \textit{Nikaid\^o and Isoda (NI) error} \cite{NI55}: 
\begin{align}
\label{eq:NI}
	NI(\hat \mu _x, \hat \mu _y) = \sup _{\mu _y \in \calP (\calY)} L (\hat \mu _x, \mu _y) - \inf _{\mu _x \in \calP (\calX)} L (\mu _x, \hat \mu _y).
\end{align}
A pair of distributions $(\tilde \mu _x, \tilde \mu _y) \in \calP(\calX) \times \calP (\calY)$ is called an \textit{$\ve$-MNE} if it satisfies $NI(\tilde \mu_x , \tilde \mu _y) \leq \ve$. Note that for an MNE $(\mu _x ^*, \mu _y ^*)$, $NI(\mu _x ^*, \mu ^* _y) = 0$.

As mentioned in Section \ref{sec:intro}, although existence is less of an issue when we consider MNEs instead of pure NEs, finding an MNE for a given game is typically a difficult task. In \cite{DEJ+20} the authors propose an approach for approximating MNEs associated with $l$ based on a mean-field dynamics. Whereas the mean-field dynamics are defined in terms of a PDE, actual computations are based on the following interacting particle system: for $i =1, \dots, n$, let $W^i$ and $\bar W ^i$ be independent Wiener processes on $\calX$ and $\calY$, respectively, take $\beta > 0$ and for some initial distributions $\mu _{x,0}$ and $\mu _{y,0}$, consider the system of $2n$ coupled SDEs
\begin{align}
	\label{eq:IPS}
	\begin{split}
	dX^i _t &= - \frac{1}{n} \sum _{j=1} ^n \grad _x l (X^i_t, Y^j _t ) dt + \sqrt{2 \beta ^{-1}} dW^i _t, \ \ X^i _0 = \xi ^i \sim \mu_{x,0}, \\
	dY^i _t &= \frac{1}{n} \sum _{j=1} ^n \grad_y l (X^j_t, Y^i _t ) dt + \sqrt{2 \beta ^{-1}} d \bar W^i _t, \ \ Y^i _0 =\bar \xi ^i \sim \mu_{y,0}.
	\end{split}
\end{align}
In \cite{DEJ+20} this is referred to as the Langevin ascent-descent dynamics, or entropic regularisation (Algorithm 1 therein). The dynamics also resembles those of Algorithm 4 in \cite{HLC19}, therein referred to as approximate infinite-dimensional mirror descent. 

 In \cite{DEJ+20} the authors go on to prove a series of results for the limit $n \to \infty$ and about the Wasserstein gradient flow associated with the corresponding mean-field dynamics. First, they show that the empirical measure of the particle system \eqref{eq:IPS} converges, with respect to the $W_2$-metric and uniformly in time, to a solution of the gradient flow \eqref{eq:iwgf-dominguez}, 
  \begin{equation}\label{eq:iwgf-dominguez}
\begin{cases}
    \partial_{t} \mu_{x}=\nabla_{x} \cdot\left(\mu_{x} \nabla_{x} V_{x}\left(\mu_{y}, x\right)\right)+\beta^{-1} \Delta_{x} \mu_{x}, \\ 
    \partial_{t} \mu_{y}=-\nabla_{y} \cdot\left(\mu_{y} \nabla_{y} V_{y}\left(\mu_{x}, y\right)\right)+\beta^{-1} \Delta_{y} \mu_{y}, 
\end{cases}
\end{equation}
where
\begin{align*}
    V_x(\mu _y, x) = \int_{\calY} l(x,y) d \mu_y (y), \ \  V_y (\mu_x, y) = \int _{\calX} l(x,y) d \mu_x (x).
\end{align*}
It is also shown that the mean absolute error of the $NI$-error associated with the empirical measure converges to 0 (uniformly in time), that if the solution of \eqref{eq:iwgf-dominguez} is to converge in time, then the limit can be characterised as a certain fixed point, and is an $\varepsilon$-MNE for $L$ given that $\beta$ is above a specified threshold. 
 
In the following section we establish large deviation results for the empirical measures of the particle system \eqref{eq:IPS} and show that some of the convergence results in \cite{DEJ+20} become corollaries of the corresponding LDP. 

\section{An LDP and a.s.-convergence of the NI error}
Consider the coupled systems of SDEs \eqref{eq:IPS}. We view this as describing the dynamics of $2n$ particles over the (arbitrary) time interval $[0,T]$. The empirical measures for the two collections of particles are defined as
\begin{align}
\label{eq:EmpMeas}
	\mu ^n (\cdot) = \frac{1}{n} \sum_{i=1} ^n \delta _{ X^i} (\cdot), \ \ \nu ^n = \frac{1}{n} \sum _{i=1} ^d \delta _{Y^i} (\cdot).
\end{align}
These measures are viewed as elements in $\calP (C([0,T]: \calX))$ and $\calP (C([0,T]:\calY))$, respectively, where we equip $C([0,T]: \calX)$ and $C([0,T]:\calY)$ with the supremum norm. For $t \in [0,T]$, the corresponding $t$-marginals are denoted $\mu ^n _t (\cdot)= \frac{1}{n} \sum_{i=1} ^n \delta _{ X^i _t} (\cdot)$ and $\nu ^n _t (\cdot)= \frac{1}{n} \sum_{i=1} ^n \delta _{ Y^i _t} (\cdot)$, which belong to $\calP (\calX)$ and $\calP(\calY)$, respectively. Henceforth we make the following assumption on the loss $l$.
\begin{assumption}
\label{ass:loss}
The loss function $l$ is continuous and bounded, and $\grad l$ is Lipschitz and bounded. 

\end{assumption}
Under this assumption, the regularity properties of $l$ carries over to the coefficients appearing in an alternative form of \eqref{eq:IPS}, ensuring that these are regular enough for existing large deviation results to apply. It is clear that the assumptions can be weakened in different directions, in particular the assumption that $l$ and $\grad l$ are bounded. However, the focus of this note is no to obtain the most general results possible, why we are content with using this more convenient assumption for now. Note also that in \cite{DEJ+20} the underlying parameter spaces are assumed to be compact, which under their assumptions on $l$ implicitly ensures boundedness of both $l$ and $\grad l$.

In order to state the main large deviation result, we first introduce some notation from \cite{BDF12}, in which the LDP is established on an augmented space (see \cite{BDF12} for a more thorough description). Let $\calS = C([0,T]: \calZ)$, the trajectory space for each particle $Z^i = (X^i, Y^i)$, and $\calW = C([0,T]: \calZ)$, the trajectory space for the Wiener process, both equipped with the maximum norm. Let $\calR$ denote the space of deterministic relaxed controls on $\calZ \times [0,T]$ and $\calR _1$ the corresponding subset of elements with finite first moment. $\calR _1$ is equipped with the topology generated by the $W_1$-metric applied to normalised versions of the elements in $\calR _1$: For any $r, \tilde r \in \calR _1$, consider the distance $d_{\calR _1} (r, \tilde r) = W_1 (r/T, \tilde r / T)$, where $r/T$ is the measure $\calS \times [0,T] \ni A \mapsto r(A)/T$ (see \cite{BDF12} and references therein for further details related to these spaces).	

Within the space $\calP (\calS \times \calR_1 \times \calW)$, let $\calP _{\infty} $ denote the collection of measures $\Theta \in \calP (\calS \times \calR _1 \times \calW)$ that satisfy
\begin{itemize}
	\item $\int _{\calR _1} \int _{\calZ \times [0,T]} ||z ||^2 r(dz dt) \Theta _{\calR _1} (dr) < \infty$,
	\item $\Theta$ is a weak solution of the SDE
	\begin{align}
	\label{eq:limitSDE}
	\begin{split}
		d \tilde X (t) &= b (\tilde X (t), \calL (\tilde X (t)))dt + \sqrt{2 \beta ^{-1}} \int _{\calZ} z \rho _t (dz) dt + \sqrt{2 \beta ^{-1}} d \hat W (t), \\
		\tilde X (0) &\sim \gamma _0,
		\end{split} 
	\end{align}
	where $\hat W$ is a standard Wiener process defined on some probability space and under the corresponding probability measure the triplet $(\tilde X, \rho, \hat W)$ has distribution $\Theta$; $\calL (\tilde X (t))$ is the law of the random variable $\tilde X (t)$.
\end{itemize}
The main result in \cite{BDF12} states that the sequence $\{ \gamma ^n \}$ of empirical measures satisfies the LDP with rate function (see Remark 3.2 in \cite{BDF12})

\begin{align}
	\label{eq:rate3}
	J (\theta) = \inf _{\Theta \in \calP _\infty : \Theta _{\calS} = \theta} \bE _{\Theta} \left[ \frac{1}{2} \int _0 ^T \norm{u(t)} ^2 dt \right], \ \ \theta \in \calP (C([0,T]:\calZ)),
\end{align}
where
\begin{align*}
	u(t) = \int _{\calZ} z \rho _t (dz), 
\end{align*}
and the triple $(\bar Z, \rho, \bar W)$ is the canonical process on $\calS \times \calR _1 \times \calW$ (equipped with its Borel $\sigma$-algebra) and $\Theta$-a.s.\ $\bar Z$ satisfies
\begin{align}
\label{eq:barZ}
	d \bar Z (t) = b (\bar Z (t), \theta (t)) dt + \sqrt{2 \beta ^{-1}}u(t)dt + \sqrt{2 \beta ^{-1}} d \bar W (t).
\end{align}
 This is a control-type formulation of the rate function, where $u$ acts as the control in the limit equation for $\bar Z$. 

We are now ready to state the main large deviation result. The key observation is that, under Assumption \ref{ass:loss}, the LDP for the pair of empirical measures $(\mu ^n, \nu ^n)$ can be shown using the results from \cite{BDF12} described in the previous paragraphs. 

\begin{theorem}
\label{thm:LDP}
For each $n \in \bN$, define $\gamma ^n =  \frac{1}{n} \sum _{i=1} ^n \delta _{(X^{i,n}, Y^{i,n})} \in \calP (C([0,T]:\calZ))$. Assume that $\gamma ^n _0 \to \gamma _0$, for some $\gamma _0 \in \calP (\calZ)$, as $n \to \infty$. Then, the family of empirical measures $\{ \gamma ^n \} _{n \in \bN}$ satisfies an LDP, on $\calP (C([0,T]:\calZ))$, with speed $n$ and rate function $J$ given in \eqref{eq:rate3}.

Moreover, under the same conditions and with $\gamma ^n$ viewed as an element of $C([0,T]: \calP (\calZ))$, $\{ \gamma ^n \} _{n \in \bN}$ satisfies an LDP with speed $n$ and rate function
	\begin{align}
	\label{eq:rate2}
		I(\theta) = \frac{\beta}{4} \int _0 ^T \sup _{f: \langle |\nabla_z f|^2, \theta (t) \rangle \neq 0} \frac{ \langle \dot \theta (t) - A_\theta ^* \theta (t) , f \rangle ^2}{\langle | \grad _z f |^2, \theta (t) \rangle} dt,
	\end{align}
	where the supremum is taken over real Schwartz functions $f$ on $\calZ$ and, for an element $\theta \in C([0,T]:\calP(\calZ))$, $A^* _{\theta}$ is defined in \eqref{eq:PDEZ}.
\end{theorem}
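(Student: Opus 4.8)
The plan is to deduce both statements from the abstract large deviation result of \cite{BDF12}, the key point being that \eqref{eq:IPS}, although written as $2n$ equations, is a system of $n$ weakly interacting $\calZ$-valued diffusions with constant diffusion coefficient. Setting $Z^i=(X^i,Y^i)$ and $B^i=(W^i,\bar W^i)$, the system \eqref{eq:IPS} becomes
\begin{equation*}
	dZ^i_t = b\bigl(Z^i_t,\gamma^n_t\bigr)\,dt+\sqrt{2\beta^{-1}}\,dB^i_t,\qquad Z^i_0=(\xi^i,\bar\xi^i),\quad \gamma^n_t=\tfrac1n\textstyle\sum_{j=1}^n\delta_{Z^j_t},
\end{equation*}
where, for $m\in\calP(\calZ)$,
\begin{equation*}
	b\bigl((x,y),m\bigr)=\Bigl(-\textstyle\int_\calZ\grad_x l(x,y')\,m(dx'dy'),\ \int_\calZ\grad_y l(x',y)\,m(dx'dy')\Bigr).
\end{equation*}
Under Assumption~\ref{ass:loss}, $b$ is bounded by $\norm{\grad l}_\infty$, Lipschitz in the spatial variable uniformly in $m$, and Lipschitz in $m$ for $d_{BL}$ (equivalently $W_1$), since the integrands above are bounded and Lipschitz; the diffusion coefficient $\sqrt{2\beta^{-1}}I$ is constant and uniformly elliptic. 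Thus the regularity and nondegeneracy hypotheses of \cite{BDF12} hold, and, together with $\gamma^n_0\to\gamma_0$, their main result (in the form of Remark 3.2 therein) gives the first assertion, with $J$ a good rate function.

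For the second assertion I would use the contraction principle. Let $\Phi\colon\calP(C([0,T]:\calZ))\to C([0,T]:\calP(\calZ))$ send $\eta$ to the path $t\mapsto\eta\circ\pi_t^{-1}$ of its $t$-marginals, $\pi_t$ being evaluation at $t$; then $\Phi(\gamma^n)$ is $\gamma^n$ regarded as a path of marginals. For each $t$ the map $\eta\mapsto\eta\circ\pi_t^{-1}$ is weakly continuous, and uniformity in $t$ on weakly compact sets of measures follows from a standard tightness/Arzel\`a--Ascoli argument; since $J$ is good this suffices to apply the contraction principle, which yields an LDP on $C([0,T]:\calP(\calZ))$ with good rate function $\tilde I(\theta)=\inf\{J(\eta):\Phi(\eta)=\theta\}$.

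It then remains to identify $\tilde I$ with the functional $I$ of \eqref{eq:rate2}; this is a Dawson--G\"artner-type computation. Fix $\theta$ and $\Theta\in\calP_\infty$ projecting onto $\theta$, with canonical triple $(\bar Z,\rho,\bar W)$ and control $u$ as in \eqref{eq:barZ}, and recall that $A^*_\theta$ is the forward (Fokker--Planck) operator of the mean-field limit, so that the uncontrolled equation $\dot\theta=A^*_\theta\theta$ is the law-of-large-numbers PDE underlying \eqref{eq:IPS} (the $\calZ$-form of \eqref{eq:iwgf-dominguez}). By It\^o's formula the marginal flow $\theta(t)=\calL(\bar Z(t))$ solves weakly $\partial_t\theta(t)=A^*_\theta\theta(t)-\sqrt{2\beta^{-1}}\,\grad_z\!\cdot\!\bigl(\bar u_t\,\theta(t)\bigr)$ with $\bar u_t(z)=\bE_\Theta[u(t)\mid\bar Z(t)=z]$, and conditional Jensen gives $\tfrac12\int_0^T\bE_\Theta\norm{u(t)}^2dt\ge\tfrac12\int_0^T\langle|\bar u_t|^2,\theta(t)\rangle dt$. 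Testing the continuity equation against a Schwartz function $f$ and applying Cauchy--Schwarz with respect to $\theta(t)$ yields $\langle\dot\theta(t)-A^*_\theta\theta(t),f\rangle^2\le 2\beta^{-1}\langle|\bar u_t|^2,\theta(t)\rangle\,\langle|\grad_z f|^2,\theta(t)\rangle$, hence $\tfrac12\langle|\bar u_t|^2,\theta(t)\rangle\ge\tfrac{\beta}{4}\sup_f\langle\dot\theta(t)-A^*_\theta\theta(t),f\rangle^2/\langle|\grad_z f|^2,\theta(t)\rangle$; integrating in $t$ shows $\tilde I\ge I$. Equality is attained by the feedback control $u(t)=\grad_z\psi_t(\bar Z(t))$, with $\psi_t$ solving the weighted Poisson equation $\grad_z\!\cdot\!(\theta(t)\grad_z\psi_t)=-(2\beta^{-1})^{-1/2}(\dot\theta(t)-A^*_\theta\theta(t))$: this is realised as an admissible relaxed control $\rho_t=\delta_{\grad_z\psi_t(\bar Z(t))}$ that reproduces the prescribed flow $\theta$ and turns both inequalities into equalities; if no finite-energy $\Theta$ projects onto $\theta$, both $\tilde I(\theta)$ and $I(\theta)$ equal $+\infty$.

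The reformulation, the verification of the \cite{BDF12} hypotheses, and the contraction step are essentially routine. I expect the main obstacle to be the last step, and within it the attainability of the variational lower bound: one must show that the optimal feedback field $\grad_z\psi_t$ is regular enough to define an admissible relaxed control with marginal flow exactly $\theta$, which requires a mollification/approximation argument when $\theta(t)$ is not smooth or lacks full support, together with the usual care in establishing the continuity equation for the controlled marginal flow.
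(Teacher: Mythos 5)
Your proposal follows essentially the same route as the paper: rewrite \eqref{eq:IPS} as $n$ weakly interacting $\calZ$-valued diffusions with drift $b(z,m)$, verify the regularity (Lipschitz in $z$ and in $m$) required by \cite{BDF12} under Assumption~\ref{ass:loss}, invoke the main theorem of \cite{BDF12} for the LDP with rate $J$, and then pass to $C([0,T]:\calP(\calZ))$ via the contraction principle applied to the marginal-flow map. The one place you go further than the paper is the identification of the contracted rate function with the Dawson--G\"artner form \eqref{eq:rate2}: the paper presents this as a formal/heuristic step (``standard calculations suggest'') and defers the rigorous equivalence to \cite{DG87,BDF12,BZ20,BZ22}, while you sketch the two inequalities directly --- conditional Jensen plus Cauchy--Schwarz for the lower bound on the control cost, and a feedback control $u(t)=\grad_z\psi_t(\bar Z(t))$ solving a weighted Poisson equation for attainability. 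You are right that the attainability/regularity of $\grad_z\psi_t$ is the genuine obstruction; that is precisely the gap the paper's cited references \cite{BZ20,BZ22} are meant to address, so your sketch matches the paper's level of rigor (neither closes this step from scratch). One small imprecision: $d_{BL}$ and $W_1$ are not equivalent metrics on $\calP(\calZ)$ for noncompact $\calZ$ (only $d_{BL}\le W_1$); the paper works with $d_{BL}$, which is the metric for which your Lipschitz bound on $b(z,\cdot)$ actually holds, and that is all that is needed.
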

Before we give a proof, we comment briefly on the result. First, those familiar with large deviation theory recognise this as a Dawson-G\"artner type result. Whereas the original results by Dawson and G\"artner \cite{DG87} can be applied in the current setting, with different assumptions on $l$ and $\grad l$, their results require using a certain inductive topology on the space of probability measures; see, e.g., \cite{BDF12, Fis14} for more on this. The results in \cite{DG87} are also restricted to diffusion coefficients that are non-degenerate and independent of the empirical measures $\mu ^n, \nu ^n$, whereas for future work we are interested in dynamics where the latter does not hold. We have therefore opted to use a more flexible approach to the large deviation result also in this simpler setting. Lastly, we expect that the alternative formulation \eqref{eq:rate2} of the rate function will be beneficial for studying the performance of the proposed algorithms, similar to how a parellell form of the rate function for small-noise diffusions has been used in the context of importance sampling \cite{DSZ15, BD19}

\begin{proof}[Proof of Theorem \ref{thm:LDP}]
	The proof follows from the LDP in \cite{BDF12} once we express the particle dynamics in an appropriate way and verify that the assumptions of \cite{BDF12} hold. For the first part, we define a two-component particle system, with components $X^n$ and $Y^n$: set $Z^n = (Z^{1,n}, \dots, Z^{n,n})$, where $Z^{i,n} = (X^{i,n}, Y^{i,n})$. With this definition, we identify $\gamma ^n$ as the empirical measure of $Z^n$:
	\begin{align}
	\label{eq:gamma_n}
	\gamma ^n (\cdot) = \frac{1}{n} \sum _{i=1} ^n \delta _{(X^{i,n}, Y^{i,n})} (\cdot)  = \frac{1}{n} \sum _{i=1} ^n \delta _{Z^{i,n}} (\cdot) \ \in \calP (C([0,T]:\calX \times \calY)).
	\end{align}
The empirical measures $\mu ^n$ and $\nu ^n$, and their $t$-marginals, are now the marginals of $\gamma ^n$ and $\gamma ^n _t$, e.g., $\mu _t ^n (\cdot) = \gamma_t ^n (\cdot \times \calY)$. To ease the notation we set $\gamma ^n _\calX (\cdot) = \gamma ^n (\cdot \times \calY) $, $\gamma ^n _{\calX,t}$ to be the corresponding $t$-marginal and analogously for $\gamma ^n _\calY$, $\gamma ^n _{\calY, t}$.
	
	With the dynamics \eqref{eq:IPS} for $X^n$ and $Y^n$, the dynamics for the ``new'' system $Z^n$ can be expressed as, for $i=1, \dots, n$,
	\begin{align}
	\label{eq:Zn}
	\begin{split}
		dZ^{i,n} _1 (t) &= - \int _{\calX \times \calY} \grad _x l (Z^{i,n} _1 (t), y) d\gamma ^n _t (x,y) dt + \sqrt{2 \beta ^{-1}}dW ^i (t),\\
		dZ ^{i,n} _2 (t) &= \int _{\calX \times \calY} \grad _y l (x, Z^{i,n} _2 (t)) d\gamma ^n _t (x,y) dt + \sqrt{2 \beta ^{-1}}d\bar W ^i (t).
	\end{split}
	\end{align}

	To express this in a more standard form, similar to the interacting particle systems treated in \cite{BDF12}, we define the function $b: \calZ \times \calP (\calZ) \to \calZ$ as $b (z, \eta) = \left( b_1 (z, \eta), b _2 (z, \eta) \right)$, with
	\begin{align}
	\label{eq:defb}
		\begin{split}
			b_1 (z, \eta) &= - \int _\calY \grad _x l(z_1, y) d\eta (x, y), \\
			b_2 (z, \eta) &= \int _{\calX} \grad_y l(x, z_2) d\eta (x,y).
		\end{split}
	\end{align}
	If we also define the process $\tilde W ^i$ on $\calX \times \calY$ as $\tilde W ^i (t) = (W^i (t), \bar W ^i (t))$, a Wiener process on $\calZ$, then the dynamics for the $Z^{i,n}$s can be expressed as follows: for $t \in [0,T]$, $Z^{i,n} (t)$ is a solution of the SDE
	\begin{align}
	\label{eq:DefZ}
		dZ ^{i,n} (t) = b (Z^{i,n} (t) , \gamma ^n _t) dt + \sqrt{2 \beta ^{-1}} d\tilde W ^i (t).
	\end{align}
	This SDE is of the form considered in \cite{BDF12}. It remains to check that the assumptions used therein for proving the LDP also hold in our setting. As mentioned in \cite{BDF12}, it suffices to have the drift $ b$ be uniformly Lipschitz\footnote{Weaker conditions than this, and the one imposed in this paper, can suffice as well; see comment before the proof and \cite{BDF12} for a more extensive discussion.}. We now prove that this holds under Assumption \ref{ass:loss}. 
	
Take $C > 0$ to be such that $\sup _{z \in \calZ} ||\grad l (z) || \leq C$ and let $L$ be the Lipschitz constant of $\grad l$. Then, $\grad _x l / (C \vee L)$ and $\grad _y l / (C\vee L)$ are both in $\Lip \calZ$. For any $\mu, \nu \in \calP(\calZ)$ and $z, z' \in \calZ$, we have for $b_1$ that
	\begin{align}
	\label{eq:ineqb1}
	\left| \left|  b _1 (z,\mu) - b_1 (z', \nu) \right| \right| \leq \left| \left| b_1 (z, \mu) -  b _1 (z, \nu)  \right| \right| + \left| \left|  b_1 (z', \nu) - b_1 (z, \nu) \right| \right|.	
	\end{align}  
	For the first term on the right-hand side of \eqref{eq:ineqb1}, we have 
	\begin{align*}
	&\left| \left|  b_1 (z, \mu) -  b _1 (z, \nu)  \right| \right| \\
	&= \left| \left| \int _{\calX \times \calY} \grad_x l (z_1, y) d\mu(x, y) - \int _{\calX \times \calY} \grad_x l (z_1, y) d\nu(x, y) \right| \right| \\
	&=  (C \vee L) \left| \left| \int _{\calX \times \calY} \frac{\grad_x l (z_1, y)}{C \vee L} d\mu(x, y) - \int _{\calX \times \calY}  \frac{\grad_x l (z_1, y)}{C \vee L} d\nu(x, y) \right| \right| \\
	&\leq (C \vee L) d_{BL} (\mu, \nu).
	\end{align*}
	Considering now the second term on the right-hand side of \eqref{eq:ineqb1}, we use the Lipschitz property of $\grad l$:
	\begin{align*}
		\left| \left|  b_1 (z', \nu) -  b_1 (z, \nu) \right| \right| &= \left| \left|  \int _{\calX \times \calY} \left( \grad _x l (z_1, y) - \grad_x l (z' _1, y) \right) d\nu(x,y) \right| \right| \\
		&\leq  \int _{\calX \times \calY} \left| \left| \grad _x l (z_1, y) - \grad_x l (z' _1, y) \right| \right| d\nu(x,y)  \\
		&\leq L \int _{\calX \times \calY} ||z_1 - z_1 ' || d \nu (x,y) \\
		& = L ||z_1 - z' _1||.
	\end{align*}
	The calculations for $ b _2$ are completely analogous and we conclude that 
	\begin{align*}
	\left| \left| b  (z,\mu) - b (z', \nu) \right| \right| \leq& \left| \left| b _1 (z,\mu) - b_1 (z', \nu) \right| \right| + \left| \left| b _2 (z,\mu) - b_2 (z', \nu) \right| \right| \\
	\leq & (C \vee L) d_{BL} (\mu, \nu) + L ||z_1 - z' _1|| \\
	& \quad + (C \vee L) d_{BL} (\mu, \nu) + L || z_2 - z' _2|| \\
	= & 2 (C \vee L) d_{BL} (\mu, \nu) + L \left( ||z_1 - z_1 '|| + ||z_2 - z_2 '|| \right).
	\end{align*}
	This shows that $b$ is Lipschitz.
	
	Because the diffusion coefficient $\sqrt{2 \beta ^{-1}}$ is constant, the global Lipschitz property of $ b$ is enough for an application of \cite[Theorem 3.1]{BDF12}. 
 
 It remains to move from the rate function \eqref{eq:rate3} to the calculus of variations form \eqref{eq:rate2}. Starting with the control-formulation, we can use the contraction principle to obtain an LDP for $\{ \gamma ^n\}$ on $C([0,T]: \calP (\calZ))$. Let $I$ denote the corresponding rate function. Standard calculations suggest that it can be re-written as \eqref{eq:rate2}. With $A ^u _{\theta}$ the generator associated with \eqref{eq:barZ}, and $(A ^u _{\theta}) ^*$ the corresponding (formal) adjoint---with $A^* _\theta$ corresponding to the case $u\equiv 0$---the PDE characterisation of the dynamics \eqref{eq:barZ} is
\begin{align}
\label{eq:PDEZ}
	\frac{d}{dt} \theta(t) &=  (A ^u _{\theta}) ^* \theta (t) \notag \\
		&= - \grad _z \left( \theta (t) b(\cdot, \theta (t)) \right) + \sqrt{2 \beta ^{-1}} \grad _z \left( u(t) \theta (t) \right) + \beta ^{-1} \Delta _z \theta (t),
\end{align}
which is interpreted in the weak sense. 
We now take $\calD$ to be the Schwartz space of real distributions on $\calZ$ and, for $t\in [0,T]$ and $\mu \in \calP (\calZ)$, define the norm $\norm{\cdot}_{\mu, t} $ by
\begin{align*}
	\norm{\eta}_{\mu, t}^2 = \sup _{f } \frac{\langle f, \eta \rangle ^2}{2 \beta ^{-1} \langle |\grad _z f|^2, \mu \rangle }, \ \ \eta \in \calD,
\end{align*}
where the supremum is taken over real Schwartz test functions $f$ on $\calZ$ such that $\langle |\grad _z f|^2, \mu \rangle \neq 0$. Based on the characterisation \eqref{eq:PDEZ}, with $u \equiv 0$, we have the representation 
\begin{align*}
	I (\theta) &= \frac{1}{2} \int _0 ^T \norm{ \dot \theta (t) - A_{\theta} ^* \theta (t)} ^2 _{\theta (t), t} dt \\
	& = \frac{\beta}{4} \int _0 ^T \sup _{f: \langle |\grad _z f|^2, \theta (t) \rangle \neq 0} \frac{ \langle \dot \theta (t) - A_\theta ^* \theta (t) , f \rangle ^2}{\langle | \grad _z f |^2, \theta (t) \rangle} dt.
\end{align*}
See, e.g., \cite{DG87, BDF12} for a heuristic description of how to go from \eqref{eq:PDEZ} and the control formulation \eqref{eq:rate3} to this version of the rate function. In the recent works \cite{BZ20, BZ22} the authors give, to the best of our knowledge, the first rigorous proof of this type of equivalence between the different types of rate functions, for both moderate and large deviations, in the setting of multi-scale processes. The above expression for $I$ is precisely \eqref{eq:rate2}, the prescribed form of the rate function.
\end{proof}

Theorem \ref{thm:LDP} establishes the relevant LDP by appealing to results in \cite{BDF12}. A first by-product is a law-of-large-numbers-type convergence of the empirical measures $\gamma ^n$ defined in \eqref{eq:gamma_n}. 
\begin{corollary}
\label{cor:MFlim}
The sequence of empirical measures $\gamma ^n$ converges, as $n \to \infty$, almost surely in $C([0,T]: \calP(\calZ))$ to $\gamma$, the solution of \eqref{eq:PDEZ} with $u \equiv 0$:
\begin{align}
\label{eq:gamma}
	\dot \gamma (t)= \grad _z \left( \gamma (t) b (\cdot, \gamma (t))\right) + \beta ^{-1} \Delta _z \gamma (t).
\end{align}
\end{corollary}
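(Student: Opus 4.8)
The plan is to deduce the corollary directly from the large deviation principle of Theorem~\ref{thm:LDP} along the usual route: an LDP governed by a good rate function that vanishes at a single point forces convergence to that point, and the exponential decay in the large deviation upper bound upgrades convergence in probability to almost sure convergence.

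First I would identify the zeros of the rate function $I$ in \eqref{eq:rate2}. Since the integrand is, for each $t$, a supremum of non‑negative quantities, $I(\theta)=0$ holds if and only if $\langle \dot\theta(t)-A_\theta^*\theta(t),f\rangle=0$ for Lebesgue‑almost every $t\in[0,T]$ and every real Schwartz function $f$ on $\calZ$, i.e.\ if and only if $\theta$ is a weak solution of $\dot\theta(t)=A_\theta^*\theta(t)$, which is \eqref{eq:gamma}. I would also note that every $\theta$ with $I(\theta)<\infty$ satisfies $\theta(0)=\gamma_0$: $I$ is obtained from the control rate function \eqref{eq:rate3} via the contraction principle applied to the time‑marginal map on $\calP(C([0,T]:\calZ))$ (taking $u\equiv0$ in \eqref{eq:barZ}), and each $\Theta\in\calP_\infty$ has $\tilde X(0)\sim\gamma_0$, so the associated path law is supported on trajectories started from $\gamma_0$; this is where the hypothesis $\gamma^n_0\to\gamma_0$ enters. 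Hence the effective domain of $I$ lies in $\{\theta:\theta(0)=\gamma_0\}$, and to conclude that $I$ has a unique zero it remains to show that \eqref{eq:gamma} with initial datum $\gamma_0$ is well posed in $C([0,T]:\calP(\calZ))$. This is where the global Lipschitz bound on the drift $b$ established inside the proof of Theorem~\ref{thm:LDP} is used a second time: \eqref{eq:gamma} is the forward Kolmogorov equation of the McKean--Vlasov SDE $d\bar Z(t)=b(\bar Z(t),\calL(\bar Z(t)))\,dt+\sqrt{2\beta^{-1}}\,d\bar W(t)$ with $\bar Z(0)\sim\gamma_0$, which has a unique strong solution by the classical fixed‑point argument for McKean--Vlasov equations with Lipschitz coefficients, and the flow of marginals $t\mapsto\calL(\bar Z(t))$ is the unique element of $C([0,T]:\calP(\calZ))$ solving \eqref{eq:gamma} with that initial condition; I will call it $\gamma$. (Uniqueness of this mean‑field limit, in the equivalent form \eqref{eq:iwgf-dominguez}, can also be quoted from \cite{DEJ+20}.) So $\gamma$ is the unique zero of $I$.

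With this in hand, the final step is routine. Fixing $\epsilon>0$ and a metric $d$ for $C([0,T]:\calP(\calZ))$ — for instance $d(\theta,\theta')=\sup_{t\in[0,T]}d_{BL}(\theta(t),\theta'(t))$ — the set $F_\epsilon=\{\theta:d(\theta,\gamma)\ge\epsilon\}$ is closed and avoids $\gamma$; since $I$ is a good rate function (inherited from \cite{BDF12} through the continuous contraction map), a compactness plus lower semicontinuity argument gives $\alpha_\epsilon:=\inf_{F_\epsilon}I>0$. The large deviation upper bound then yields $\mathbb{P}(d(\gamma^n,\gamma)\ge\epsilon)\le e^{-n\alpha_\epsilon/2}$ for all large $n$. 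Realising the particle systems \eqref{eq:IPS} for all $n$ on one probability space, e.g.\ by driving them with a single i.i.d.\ family $\{(\xi^i,\bar\xi^i,W^i,\bar W^i)\}_{i\ge1}$, the right‑hand side is summable in $n$, so the Borel--Cantelli lemma gives $\mathbb{P}(d(\gamma^n,\gamma)\ge\epsilon\text{ infinitely often})=0$; intersecting over $\epsilon=1/k$, $k\in\bN$, yields $\gamma^n\to\gamma$ almost surely in $C([0,T]:\calP(\calZ))$.

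I expect the only genuinely non‑routine point to be the well‑posedness of the limiting equation \eqref{eq:gamma} in the second step; everything else is the textbook passage from an LDP with a good rate function possessing a unique zero to almost sure convergence. Even that point is classical under Assumption~\ref{ass:loss}, since the drift $b$ is globally Lipschitz in both arguments (with $\calP(\calZ)$ metrised by $d_{BL}$), so the standard contraction‑mapping proof for McKean--Vlasov SDEs applies verbatim, and in any case the uniqueness of this mean‑field limit is already contained in \cite{DEJ+20}.
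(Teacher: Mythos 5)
Your proposal is correct and follows essentially the same route as the paper, which simply notes that $\gamma$ solving \eqref{eq:gamma} satisfies $I(\gamma)=0$ and then invokes a standard Borel--Cantelli argument to upgrade the LDP upper bound to almost-sure convergence. You have merely filled in the details the paper leaves implicit — uniqueness of the zero of $I$ via well-posedness of the McKean--Vlasov limit under the Lipschitz property of $b$ already established in the proof of Theorem~\ref{thm:LDP}, the strict positivity of $\inf_{F_\epsilon}I$ from goodness and lower semicontinuity, the summable exponential bound, and the coupling of all $\gamma^n$ on a common probability space.
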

\begin{proof}
	The rate function $J$ is minimised for the control $u \equiv 0$, which corresponds to the original (uncontrolled) dynamics for one pair of particles, described by \eqref{eq:barZ} with $u = 0$. The law of this process is unique (see \cite{DEJ+20}) and we have that as $n \to \infty$, $\gamma ^n$ converges to this law a.s. An application of the continuous mapping theorem then gives the desired result, recognising that the $u\equiv 0$-dynamics correspond to \eqref{eq:PDEZ} with the same choice for $u$.
\end{proof}

Corollary \ref{cor:MFlim} is a version of the first part of Theorem 3 in \cite{DEJ+20}. To see this, insert the definition \eqref{eq:defb} of $b$ into \eqref{eq:gamma}: the resulting equation is precisely the entropy-regularised gradient flow \eqref{eq:iwgf-dominguez}. More generally, the LDP for a particle systems identifies a natural candidate for the gradient flow structure of the limit as $n \to \infty$: both the dissipation mechanism and entropy functional (see \cite{AGS08}) can be identified from the LDP, see \cite{ADPZ11, ADPZ13} and subsequent work by Peletier and co-authors. In \cite{DEJ+20} the gradient flow is used to propose the dynamics used for finding MNEs. For other particle dynamics, aimed at the same task, this type of large deviation analysis can be used to identify the correct gradient flow to use for further analysis. This will be the subject of future work on mirror-descent-like algorithms.

In \cite{DEJ+20}, to establish the convergence to the gradient flow, the authors work in the $W_2$-topology on $\calP (\calX)$ and $\calP (\calY)$, consider the convergence of the $t$-marginals for $t \in [0,T]$ and conclude that the convergence is uniform in $t$. Here, the result follows from the LDP in Theorem \ref{thm:LDP}, by noting that $\gamma$ as in \eqref{eq:gamma} satisfies $I(\gamma) = 0$. A standard argument using the Borel-Cantelli lemma then shows that $\gamma$ is indeed the limit of $\gamma ^n$ as $n \to \infty$. 
 

The next result, which also follows from Theorem \ref{thm:LDP}, corresponds to the second part of Theorem 3 in \cite{DEJ+20}, modulo the different topologies and mode of convergence being used. In order to have results that are uniform over $[0,T]$, and ease notation, we define the map, for $\eta \in C[0,T]:\calP(\calZ))$,
\begin{align*}
	NI_T (\eta) = \{ NI( \eta _{\calX, t}, \eta _{\calY, t} ) \}_{t \in [0,T]}.
\end{align*}

\begin{proposition}
\label{prop:NIconv}
The sequence $ \{ NI_T (\gamma ^n ) \} _n$ converges almost surely in $C([0,T]:\bR)$ to $NI_T (\gamma )$, where $\gamma $ is the same as in Corollary \ref{cor:MFlim}. 
\end{proposition}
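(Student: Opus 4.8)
The plan is to deduce the statement from the almost sure convergence $\gamma^n \to \gamma$ in $C([0,T]:\calP(\calZ))$ established in Corollary~\ref{cor:MFlim}, by showing that the map $\eta \mapsto NI_T(\eta)$ is Lipschitz continuous from $C([0,T]:\calP(\calZ))$ into $C([0,T]:\bR)$. Since this map is deterministic, it then suffices to argue pathwise, on the almost sure event on which $\sup_{t\in[0,T]} d_{BL}(\gamma^n_t, \gamma_t) \to 0$.

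First I would record that, because $L$ is linear in each of its two arguments separately, the extrema in the definition~\eqref{eq:NI} of the NI error reduce to pointwise extrema: for $\hat\mu_x \in \calP(\calX)$ and $\hat\mu_y \in \calP(\calY)$,
\begin{align*}
	NI(\hat\mu_x, \hat\mu_y) &= \sup_{y\in\calY} \int_\calX l(x,y)\, d\hat\mu_x(x) - \inf_{x\in\calX} \int_\calY l(x,y)\, d\hat\mu_y(y) \\
	&= \sup_{y\in\calY} V_y(\hat\mu_x, y) - \inf_{x\in\calX} V_x(\hat\mu_y, x),
\end{align*}
which is finite since $l$ is bounded (no compactness of $\calX, \calY$ is needed, as $\sup_{\mu}\int g\,d\mu = \sup g$ for bounded $g$). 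Under Assumption~\ref{ass:loss}, for each fixed $y$ the function $x \mapsto l(x,y)$ lies in $BL(\calX)$ with $\norm{l(\cdot,y)}_{BL}$ bounded by a constant $K$ independent of $y$ (boundedness of $l$ controls the sup norm, boundedness of $\grad l$ the Lipschitz constant). Hence $\sup_{y} |V_y(\mu_x, y) - V_y(\mu_x', y)| \leq K\, d_{BL}(\mu_x, \mu_x')$, and since $|\sup_y a(y) - \sup_y b(y)| \leq \sup_y |a(y) - b(y)|$ this gives $|\sup_y V_y(\mu_x, y) - \sup_y V_y(\mu_x', y)| \leq K\, d_{BL}(\mu_x, \mu_x')$; the analogous estimate holds for the infimum term in the $\calY$-variable. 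Thus $NI : \calP(\calX) \times \calP(\calY) \to \bR$ is Lipschitz with respect to $d_{BL}$ in each factor.

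Next I would observe that the marginal maps $\eta \mapsto \eta_\calX$ and $\eta \mapsto \eta_\calY$ on $\calP(\calZ)$ are non-expansive up to a fixed constant for $d_{BL}$, since a test function $f \in BL(\calX)$ pulls back to $(x,y) \mapsto f(x)$ in $BL(\calZ)$ with comparable $BL$-norm; consequently $\sup_t d_{BL}(\gamma^n_{\calX,t}, \gamma_{\calX,t}) + \sup_t d_{BL}(\gamma^n_{\calY,t}, \gamma_{\calY,t}) \leq C \sup_t d_{BL}(\gamma^n_t, \gamma_t) \to 0$ almost surely by Corollary~\ref{cor:MFlim}. Combining this with the Lipschitz estimate from the previous step yields
\begin{align*}
	\sup_{t\in[0,T]} \left| NI(\gamma^n_{\calX,t}, \gamma^n_{\calY,t}) - NI(\gamma_{\calX,t}, \gamma_{\calY,t}) \right| \leq C' \sup_{t\in[0,T]} d_{BL}(\gamma^n_t, \gamma_t) \longrightarrow 0
\end{align*}
almost surely, i.e., $NI_T(\gamma^n) \to NI_T(\gamma)$ in the supremum norm on $C([0,T]:\bR)$. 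It remains only to note that these objects lie in $C([0,T]:\bR)$: since $\gamma^n, \gamma \in C([0,T]:\calP(\calZ))$ (Corollary~\ref{cor:MFlim}), hence weakly continuous in $t$, and $NI$ is $d_{BL}$-continuous, the compositions $t \mapsto NI(\gamma^n_{\calX,t}, \gamma^n_{\calY,t})$ and $t \mapsto NI(\gamma_{\calX,t}, \gamma_{\calY,t})$ are continuous.

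I do not anticipate a serious obstacle. The two points requiring a little care are the reduction of the measure-valued extrema to pointwise ones (which uses linearity of $L$ together with boundedness of $l$ to ensure finiteness over the non-compact $\calX, \calY$) and the bookkeeping of $BL$-norm constants when passing between $\calZ$ and its factors, so that the continuity is genuinely uniform in $t$. Alternatively, the convergence could be obtained by applying the contraction principle to the LDP of Theorem~\ref{thm:LDP} with the continuous map $\eta \mapsto NI_T(\eta)$, but the pathwise argument above is shorter and already delivers the almost-sure conclusion.
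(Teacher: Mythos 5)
Your proposal is correct and, at the structural level, mirrors the paper's argument: establish that the NI error is Lipschitz with respect to $d_{BL}$ (with a constant controlled uniformly in $t$ by Assumption~\ref{ass:loss}), then compose with the almost sure convergence $\gamma^n \to \gamma$ in $C([0,T]:\calP(\calZ))$ from Corollary~\ref{cor:MFlim} to conclude uniform-in-$t$ convergence.

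The one place you deviate is in how the Lipschitz property of the NI error is obtained. The paper adapts the computation from \cite[Lemma 2]{DEJ+20}, working directly with $\sup_{\mu_y\in\calP(\calY)} L(\mu_x,\mu_y)$ and bounding it via the $BL$-norm of $x\mapsto\int l(x,y)\,d\mu_y(y)$, uniform over $\mu_y$. You instead first collapse the extrema over probability measures to pointwise extrema, $\sup_{\mu_y} L(\mu_x,\mu_y)=\sup_{y} V_y(\mu_x,y)$ (valid here since $V_y(\mu_x,\cdot)$ is bounded and continuous), and then use the elementary bound $|\sup_y a - \sup_y b|\le \sup_y|a-b|$ together with the uniform $BL$-bound on $l(\cdot,y)$. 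This is a slightly more self-contained and transparent derivation of the same Lipschitz estimate, and the explicit remark that the projection maps $\eta\mapsto\eta_\calX$, $\eta\mapsto\eta_\calY$ are non-expansive for $d_{BL}$ tidies a step the paper leaves implicit. Both routes yield the same constant-type bound and the same conclusion; neither buys anything substantively beyond the other.
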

\begin{proof}

As a first step we establish that the map $\eta _t \mapsto NI (\eta _{\calX, t}, \eta _{\calY, t})$ is continuous for each $t \in [0,T]$. In Lemma 2 in \cite{DEJ+20} it is shown that the NI error, defined in \eqref{eq:NI}, is a Lipschitz map on $\calP(\calX) \times \calP(\calY)$ when the $W_1$ distance is used to define the topology on $\calP (\calX) \times \calP (\calY)$. Therein the underlying state spaces are also assumed compact. Removing compactness, Assumption \ref{ass:loss} is enough for the main ideas of the proof to be used also for the topology of weak convergence on $\calP(\calX)$ and $\calP (\calY)$. First, similar to \cite{DEJ+20}, we note that under Assumption \ref{ass:loss}, the function $x \mapsto \int l(x,y) d \mu _y (y)$, for any $\mu_y \in \calP (\calY)$, is continuous, bounded and Lipschitz. 

With this observation, we now adapt the arguments from Lemma 2 in \cite{DEJ+20} to establish the continuity of the NI error. Instead of $W_1$, we work with the dual bounded-Lipschitz metric, and rather than the Lipschitz constant of $l$ we use an upper bound $C$ on the bounded Lipschitz norm of the function $x \mapsto \int l(x,y) d \mu _y (y)$. We note that this upper bound $C$ can be taken independent of $\mu _y$ due to the assumptions on $l$. Using this property, for any two $\mu_x, \tilde \mu _x \in \calP (\calX)$, the steps used in \cite[Lemma 2]{DEJ+20} leads to the upper bound 

\begin{align*}
	\left| \sup _{\mu _y \in \calP (\calY)} L(\mu_x, \mu _y) - \sup _{\mu _y \in \calP (\calY)} L(\tilde \mu_x, \mu _y) \right| \leq C d_{BL} (\mu _x, \tilde \mu _x).
\end{align*}
Using the analogous argument for the map $y \mapsto \int l(x,y) d \mu _x (x)$, for any $\mu _x \in \calP (\calX)$, we have
\begin{align*}
	\left| \sup _{\mu _x \in \calP (\calX)} L(\mu_x, \mu _y) - \sup _{\mu _x \in \calP (\calX)} L(\mu_x, \tilde \mu _y) \right| \leq C d_{BL} (\mu _y, \tilde \mu _y),
\end{align*}
for any $\mu _y, \tilde \mu _y \in \calP (\calY)$. The continuity of the NI error now follows from an application of the triangle inequality.

With the desired continuity established, we can now show the claimed convergence of $NI_T(\gamma ^n)$. From Corollary \ref{cor:MFlim} we have convergence of the marginals of $\gamma ^n$ to those of the solution $\gamma$ of \eqref{eq:gamma}. Combined with the continuity obtained in the previous paragraphs, for each $t \in [0,T]$, with probability one we have
\begin{align*}
	NI (\gamma ^n _{\calX, t}, \gamma ^n _{\calY, t}) \to NI (\gamma _{\calX, t}, \gamma _{\calY, t}), \ \ n \to \infty.
\end{align*}

To obtain the convergence uniformly in $t$, we employ again the argument used to show that the NI error is a continuous function on $\calP (\calX) \times \calP (\calY)$. Repeating the steps outlined above, similar to those in \cite{DEJ+20}, we arrive at, for any $t \in [0,T]$, 
\begin{align*}
    |NI (\gamma ^n _t) - NI(\gamma _t)| &\leq \tilde C d_{BL}(\gamma ^n _t, \gamma _t),
\end{align*}
a.s., where $\tilde C$ is an upper bound on the $BL$-norm of  $x \mapsto \int l(x,y) d \mu _y (y)$ (for any $\mu _y$) and $y \mapsto \int l(x,y) d \mu _x (x)$ (for any $\mu _x$). By the convergence $\gamma ^n \to \gamma$, for a given $\tilde \ve >0$, there is a $N_{\tilde \ve} \in \bN$ such that $d_{BL}(\gamma ^n , \gamma) \leq \tilde \ve$ a.s.\ for all $n \geq N _{\tilde \ve}$. This also implies the same property for the $t$-marginals: a.s., for any $t \in [0,T]$, $d_{BL} (\gamma ^n _t, \gamma _t) \leq \tilde \ve$ if $n \geq N_{\tilde \ve}$. Therefore, since the constant $\tilde C$ does not depend on $n$ or $t$,
\begin{align*}
    \sup _{t\in [0,T]} |NI (\gamma ^n _t) - NI(\gamma _t)| &\leq \tilde C \sup _{t \in [0,T]} d_{BL}(\gamma ^n _t, \gamma _t) \\
    & \leq \tilde C \tilde \ve,
\end{align*}
a.s.\ for all $n \geq N _{\tilde \ve}$. For a given $\ve > 0$, pick $\tilde \ve = \ve / \tilde C$, so that the inequality becomes
\[
    \sup _{t\in [0,T]} |NI (\gamma ^n _t) - NI(\gamma _t)| \leq \ve,
\]
a.s.\ for all $n \geq N_{\tilde \ve}$. This shows the claimed convergence.
\end{proof}

In addition to the convergence in Proposition \ref{prop:NIconv}, as a consequence of Theorem \ref{thm:LDP} we also obtain the corresponding LDP for the NI error. 
\begin{corollary}
\label{cor:LDPNI}
	 The sequence $\{NI_T (\gamma ^n) \} _n$ satisfies an LDP on $C([0,T]:\bR)$, with speed $n$ and rate function, for $\alpha \in C([0,T]:\bR)$,
	 \begin{align}
	 	\label{eq:rateNI}
		\tilde I (\alpha) = \inf \left\{ I(\theta); \theta \in C([0,T]:\calP(\calZ)) \ \textrm{s.t. }  \  NI_T(\theta) = \alpha \right\}.
	 \end{align} 
\end{corollary}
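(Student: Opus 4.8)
The plan is to deduce this from the LDP for $\{\gamma^n\}$ on $C([0,T]:\calP(\calZ))$ established in the second part of Theorem \ref{thm:LDP} by a direct application of the contraction principle. Recall that if $\{\gamma^n\}$ satisfies an LDP with speed $n$ and rate function $I$ on a space $E_1$, and $F: E_1 \to E_2$ is continuous, then $\{F(\gamma^n)\}$ satisfies an LDP with speed $n$ and rate function $\alpha \mapsto \inf\{I(\theta): F(\theta) = \alpha\}$ (with the convention $\inf \emptyset = +\infty$). Here $E_1 = C([0,T]:\calP(\calZ))$, $E_2 = C([0,T]:\bR)$ and $F = NI_T$, so that the only point to verify is that $NI_T$ is a continuous map between these two spaces; the prescribed rate function \eqref{eq:rateNI} is then exactly the one delivered by the contraction principle.

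Continuity of $NI_T$ is, in essence, already contained in the proof of Proposition \ref{prop:NIconv}. Working under Assumption \ref{ass:loss} and using that $x \mapsto \int l(x,y)\, d\mu_y(y)$ and $y \mapsto \int l(x,y)\, d\mu_x(x)$ are bounded and Lipschitz with a bound $\tilde C$ on the $BL$-norm that is uniform in $\mu_y$, respectively $\mu_x$, the argument there yields the pointwise estimate $|NI(\mu_{\calX},\mu_{\calY}) - NI(\nu_{\calX},\nu_{\calY})| \leq \tilde C\, d_{BL}(\mu,\nu)$ for $\mu,\nu \in \calP(\calZ)$, and hence
\[
	\sup_{t \in [0,T]} \left| NI(\eta_{\calX,t},\eta_{\calY,t}) - NI(\tilde\eta_{\calX,t},\tilde\eta_{\calY,t}) \right| \leq \tilde C \sup_{t \in [0,T]} d_{BL}(\eta_t,\tilde\eta_t)
\]
for all $\eta,\tilde\eta \in C([0,T]:\calP(\calZ))$. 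Since $d_{BL}$ metrises the topology of weak convergence on $\calP(\calZ)$, this shows that $NI_T$ is globally Lipschitz, hence continuous, from $C([0,T]:\calP(\calZ))$ into $C([0,T]:\bR)$; the pointwise estimate together with the continuity of $t \mapsto \eta_t$ also shows $NI_T(\eta)$ is indeed an element of $C([0,T]:\bR)$, so $NI_T$ is well defined between the stated spaces.

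With continuity in hand, the contraction principle applied to the LDP of Theorem \ref{thm:LDP} gives that $\{NI_T(\gamma^n)\}_n$ satisfies an LDP on $C([0,T]:\bR)$ with speed $n$ and rate function $\tilde I$ as in \eqref{eq:rateNI}. The only genuinely delicate point is topological compatibility: one must ensure that the topology on $C([0,T]:\calP(\calZ))$ for which the LDP in Theorem \ref{thm:LDP} holds --- itself obtained from the path-space LDP of \cite{BDF12} by a contraction/Dawson--G\"artner step --- is the topology of uniform convergence with $\calP(\calZ)$ carrying the weak topology, which is precisely the topology under which $NI_T$ was shown to be continuous above. These coincide, so no further work is required; alternatively, one may avoid the intermediate space entirely and reach the same conclusion by composing contractions directly from $\calP(C([0,T]:\calZ))$. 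I expect this bookkeeping of topologies to be the only mild obstacle; everything else is routine.
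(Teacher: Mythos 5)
Your proposal is correct and follows the same route as the paper: combine the LDP for $\{\gamma^n\}$ on $C([0,T]:\calP(\calZ))$ from Theorem \ref{thm:LDP} with the continuity of $NI_T$ (established in the proof of Proposition \ref{prop:NIconv}) and invoke the contraction principle. Your extra bookkeeping on the topology and on $NI_T$ being well defined into $C([0,T]:\bR)$ is sound but not a departure in method.
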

The proof is a direct consequence of combining Theorem \ref{thm:LDP}, the continuity of the map $\eta \mapsto NI_T (\eta)$ and the contraction principle \cite{Dembo98}. Note that an LDP for the NI error at a fixed time $t \in [0,T]$ follows by applying the contraction principle once more.

\subsection*{Acknowledgments}
The authors are grateful to Z.~W.~Bezemek and K.~Spiliopoulos for useful discussions of a first version of the paper. In particular regarding the relation between the different formulations of the rate functions in Theorem \ref{thm:LDP}. 

The research of VN and PN was supported by Wallenberg AI, Autonomous Systems and Software Program (WASP) funded by the Knut and Alice Wallenberg Foundation. The research of PN was also supported by the Swedish Research Council (VR-2018-07050).


\end{document}